\newcommand{\BlackBox}{\rule{1.5ex}{1.5ex}}  
\newenvironment{proof}{\par\noindent{\bf Proof\ }}{\hfill\BlackBox\\[2mm]}
\newtheorem{theorem}{Theorem}
\newtheorem{lemma}[theorem]{Lemma} 
\title{A Simple Saliency Method That Passes the Sanity Checks}
\author{
Arushi Gupta \\
\texttt{arushig@princeton.edu}\\
\And 
Sanjeev Arora\\
\texttt{arora@cs.princeton.edu}\\
}
\begin{document}

\raggedbottom

\maketitle

\begin{abstract}

There is great interest in {\em saliency methods} (also called {\em attribution methods}), which  give \textquotedblleft explanations\textquotedblright\ for a deep net's decision, by assigning a {\em score} to each feature/pixel in the input. Their design usually involves  credit-assignment via the gradient of the output with respect to input. 
Recently \citet{adebayosan} questioned the validity of many of these methods since they do not pass simple {\em sanity checks} which test whether the scores shift/vanish when  layers of the trained net are randomized, or when the net is retrained using random labels for inputs. 

We propose a simple fix to existing saliency methods that helps them pass sanity checks, which we call {\em competition for pixels}. This involves computing saliency maps for all possible labels in the classification task, and using a simple competition among them to identify and remove less relevant pixels from the map. The simplest variant of this is  {\bf Competitive Gradient \( \odot \) Input (CGI)}: it is efficient, requires no additional training, and uses only the input and gradient. Some theoretical justification is provided for it (especially for ReLU networks) and its performance is empirically demonstrated.

\end{abstract}

\section{Introduction}

Methods that allow a human to \textquotedblleft understand\textquotedblright or \textquotedblleft interpret\textquotedblright\ the decisions of deep nets have become increasingly important as deep learning moves into applications ranging from self-driving cars to analysis of scientific data. For simplicity our exposition will assume the deep net is solving an  image classification task, though the discussion extends to other data types.  In such a case the explanation consists of assigning saliency scores (also called attribution scores)  to the pixels in the input, and presenting them as a heat map  to the human. 
 
Of course, the idea of \textquotedblleft credit assignment\textquotedblright\ is already embedded in gradient-based learning, so a natural place to look for saliency scores is  the gradient of the output with respect to the input pixels. Looking for high coordinates in the gradient is akin to classical {\em sensitivity analysis} but in practice does not yield high quality explanations. However, gradient-like notions are the basis of other more successful methods. {\bf Layer-wise Relevance Propagation (LRP)} \citet{bach2015pixel}  uses a back-propagation technique where every node in the deep net receives a share of the output which it distributes to nodes below it. This happens all the way to the input layer, whereby every pixel gets assigned a share of the output, which is its score. Another rule Deep-Lift  \citet{DBLP:journals/corr/ShrikumarGK17} does this in a different way and is related to Shapley values of cooperative game theory.  

The core of many such ideas is a simple map called {\bf Gradient \( \odot \) Input}  \citet{DBLP:journals/corr/ShrikumarGK17}~: the score of a pixel in this rule is product of its  value and the partial derivative of the output with respect to that pixel. Complicated methods often reduce to {\bf Gradient \( \odot \) Input} for simple ReLU nets with zero bias. See~\citet{montavon2018methods} for a survey.

Recently~\citet{adebayosan} questioned the validity of many of these  techniques by suggesting that they don't pass simple \textquotedblleft sanity checks.\textquotedblright\ \ Their checks involve randomizing the model parameters or the data labels (see Section~\ref{sec:relwork} for details). They find that maps produced using corrupted parameters and data are often difficult to visually distinguish from those produced using the original parameters and data. This ought to make the maps less useful to a human checker. 
The authors concluded that \textquotedblleft {\em ...widely deployed saliency methods are
independent of both the data the model was trained on, and the model parameters}.\textquotedblright

The current paper focuses on multiclass classification and introduces a simple modification to existing methods: {\em Competition for pixels}. Section~\ref{sec:CGI} motivates this by pointing out a significant issue with previous methods: they produce saliency maps for a chosen output (label) node using  gradient information only for that node while ignoring the gradient information from the other (non-chosen) outputs.  To incorporate information from non-chosen labels/nodes in the multiclass setting we rely on a property called {\em completeness}  used in earlier methods, according to which the sum of pixel scores in a map is equal to the value of the chosen node (see Section~\ref{sec:CGI}). One can design saliency maps for all outputs and use completeness to assign a pixel score in each map.   One can view the various scores assigned to a single pixel  as its \textquotedblleft votes\textquotedblright\ for different labels. The competition idea is roughly to zero out any pixel whose vote for the chosen label was lower than for another (non-chosen) label. Section~\ref{subsec:theory} develops theory to explain why this modification helps pass sanity checks in the multi-class setting, and yet produces maps not too different from existing saliency maps. Section ~\ref{alg:desc} gives the formal definition of the algorithm. 

Section~\ref{sec:paramran} describes implementation of this idea for two well-regarded methods, {\bf Gradient  \( \odot \) Input} and {\bf LRP} 

and shows that they produce sensible saliency maps while also passing the sanity checks. 
We suspect our modification can make many other methods pass the sanity checks.

\section{Past related work}

We first recall the sanity checks proposed in~\citet{adebayosan}. 
\label{sec:relwork}

\textbf{The model parameter randomization test.} According to the authors, this "compares the output of a saliency method on a trained model with the output of the saliency method on a randomly initialized untrained network of the same architecture." The saliency method fails the test if the maps are similar for trained models and randomized models. The randomization can be done in stages, or layer by layer. 

\textbf{The data randomization test} "compares a given saliency method applied to a model trained on a labeled data set with the method applied to the same model architecture but trained on a copy of the data set in which we randomly permuted all labels." Clearly the model in the second case has learnt no useful relationship between the data and the labels and does not generalize. The saliency method fails if the maps are similar in the two cases on test data.

\subsection{Some saliency methods}

Let $S_y$ denote the logit computed for the chosen output node of interest, $y$.

\begin{enumerate}
\item \textbf{The Gradient  \( \odot \) Input explanation}:
Gradient  \( \odot \)  Input method  \citet{DBLP:journals/corr/ShrikumarGK17} computes 
\(. \frac{\partial  S_y}{\partial x } \odot x \) where \( \odot \) is the elementwise product. 

\item \textbf{Integrated Gradients}
Integrated gradients \citet{sundararajan2017axiomatic} also computes the gradient of the chosen class's logit. However, instead of evaluating this gradient at one fixed data point, integrated gradients consider the path integral of this value as the input varies from a baseline, \( \bar{x}\), to the actual input, \(x\) along a straight line.

\item \textbf{Layerwise Relevance Propagation}
\citet{bach2015pixel} proposed an approach for propagating importance scores called Layerwise Relevance Propagation (LRP). LRP decomposes the output of the neural network into a sum of the relevances of coordinates of the input. Specifically, if a neural network computes a function \(f(x)\) they attempt to find relevance scores \(R_p^{(1)}\) such that \(  f(x)  \approx \sum_p R_{p}^{(1)} \)

\item  \textbf{Taylor decomposition}  As stated \citet{montavon2018methods}  for special classes of piecewise linear functions that satisfy \( f(tx) = tf(x) \), including ReLU networks with no biases, one can always find a root point near the origin such that 
\( f(x) = \sum_{i=1}^d R_i(x) \)
where the relevance scores \( R_i(x) \) simplify to \(  R_i(x) = \frac{\partial f}{\partial x_i}\cdot x_i  \)

\item \textbf{DeepLIFT explanation} The DeepLIFT explanation \citet{DBLP:journals/corr/ShrikumarGK17}  calculates the importance of the input by comparing each neuron's activation to some 'reference' activation. Each neuron is assigned an attribution that represents the amount of difference from the baseline that that neuron is responsible for. Reference activations are determined by propagating some reference input, \( \bar{x} \), through the neural network. 

\end{enumerate}

\textbf{Relationships between different methods}
.\citet{kindermans2016investigating} and \citet{DBLP:journals/corr/ShrikumarGK17} showed that if modifications for numerical stability are not taken into account, the LRP rules are equivalent within a scaling factor to Gradient \( \odot \) Input. 
\citet{ancona2017towards} showed that for ReLU networks (with zero baseline and no biases) the \( \epsilon \)-LRP and DeepLIFT (Rescale) explanation methods are equivalent to the Gradient \( \odot \) Input.

\section{Adding competition}
\label{sec:CGI}

\begin{figure}
\centering
\includegraphics[scale =.15]{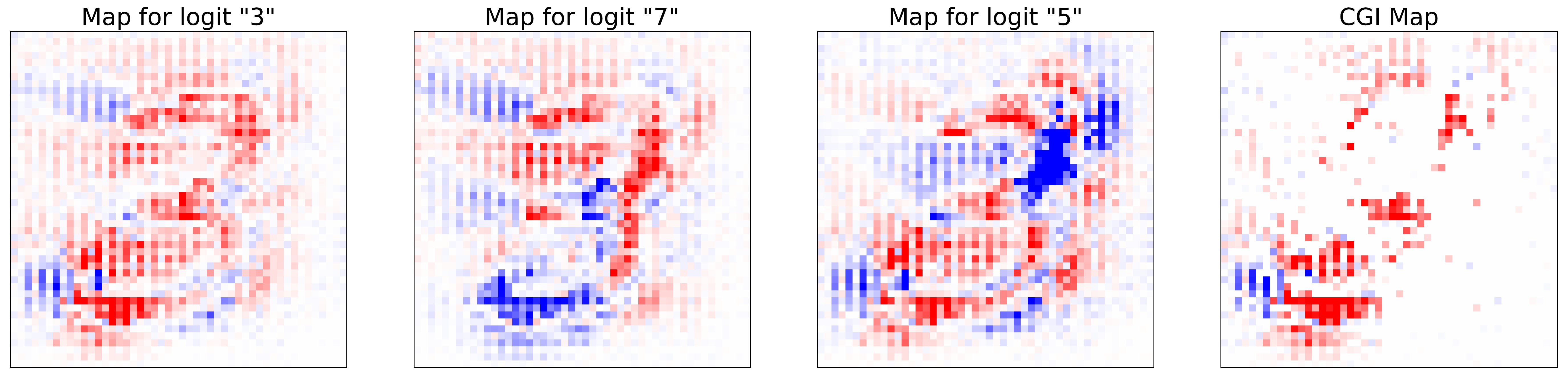}
\caption{Heatmap of {\bf Gradient \( \odot \)  Input} saliency maps produced by various logits of a deep net trained on MNIST.  Red denotes pixels with positive values  and Blue denotes negative values. The input image is of the number \( 3 \) , which is clearly visible in  all maps. Note how maps computed using logits/labels "\( 7" \) and " \( 5 \)" assign red color (resp., blue color) to pixels that would have been expected to be present (resp., absent) in those digits.  The last figure shows the map produced using our CGI method. }
\label{fig:logitmaps}
\end{figure}

The idea of competition suggests itself naturally when one examines saliency maps produced using {\em all} possible labels/logits in a multiclass problem, rather than just the chosen label.  Figure~\ref{fig:logitmaps} shows some {\bf Gradient$\odot$Input} maps produced using  AlexNet trained on MNIST \citet{lecun1998mnist}, where the first layer was modified to accept one color channel instead of 3.  Notice: {\em Many pixels found irrelevant by humans receive heat (i.e.\ positive value) in all the maps, and many relevant pixels receive heat in more than one map.}  Our experiments showed similar phenomenon on more complicated datasets such as ImageNet. This figure highlights an important point of \citet{adebayosan} which is that many saliency maps pick up a lot of information about the input itself ---e.g., presence of sharp edges--that are at best incidental to the  final classification. Furthermore, these incidental features can survive during the various randomization checks, leading to failure in the sanity check. Thus it is a natural idea to create a saliency map by {\em combining} information from all labels, in the process filtering out or downgrading the importance of incidental features.

Suppose the input is \( x \) and the net is solving a \( k \)-way classification. We assume a standard softmax output layer  whose inputs are \( k \) logits, one per label. Let $x$ be an input, $\ell$ be its label and \( y^{\ell}\) denote the corresponding logit. To explain the output of the net many methods assign a score to each pixel by using the gradient of $y^{\ell}$ with respect to $x$.
For concreteness, we use {\bf Gradient\( \odot\)Input} method, which  assigns score \( x_i f^{\ell}_i \) to pixel \( i \) where \( f^{\ell}_i \) os the coordinate in the gradient  corresponding to the \( i \)th pixel \(x_i\). 

Usually prior methods do not examine the logits corresponding to non-chosen labels as well, but as mentioned, we wish to ultimately design a simple competition among labels for pixels. {\em A priori} it can be unclear how to compare scores across labels, since this could end up being an \textquotedblleft apples vs oranges\textquotedblright\ comparison due to potentially different scaling. However, prior work \citet{sundararajan2017axiomatic} has identified a property called {\em completeness}:  this requires that the sum of the pixel scores is exactly the logit value. {\bf Gradient \( \odot \) Input} is an attractive method because it satisfies completeness exactly for ReLU nets with zero bias.  Recall that the ReLU function with {\em bias} \( a \) is \( ReLU(z, a)\max \{z-a, 0\}. \)  
\begin{lemma} {\em On ReLU nets with zero bias  {\bf Gradient \( \odot \)  Input} satisfies completeness.}
\end{lemma}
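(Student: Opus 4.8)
The plan is to exploit the fact that a ReLU network with zero bias computes a function that is \emph{positively homogeneous of degree one}, and then to invoke Euler's theorem for homogeneous functions. Concretely, writing $y^\ell(\cdot)$ for the logit as a function of the input, I would first establish that $y^\ell(tx) = t\, y^\ell(x)$ for every scalar $t>0$.

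This homogeneity follows by induction on the depth. The first affine map $x \mapsto W_1 x$ has no bias, hence is exactly linear and so degree-one homogeneous. For the inductive step, the zero-bias ReLU nonlinearity $z \mapsto \max\{z,0\}$ applied coordinatewise is itself degree-one homogeneous for positive scalars, since $\max\{tz,0\} = t\max\{z,0\}$ when $t>0$; and the composition of degree-one homogeneous maps (as well as the coordinatewise application of one, and a subsequent bias-free affine map) is again degree-one homogeneous, because $h(g(tx)) = h(t\,g(x)) = t\, h(g(x))$. Since $y^\ell$ is obtained as exactly such a composition ending in a linear map, it is degree-one homogeneous.

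Given homogeneity, I would differentiate the identity $y^\ell(tx) = t\, y^\ell(x)$ with respect to $t$ and evaluate at $t=1$. By the chain rule the left side becomes $\sum_i x_i\, \frac{\partial y^\ell}{\partial x_i}(x) = \sum_i x_i f^\ell_i$, while the right side is $y^\ell(x)$. This gives $\sum_i x_i f^\ell_i = y^\ell$, which is precisely the completeness property: the {\bf Gradient $\odot$ Input} scores sum to the logit.

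The only real obstacle is differentiability: ReLU is not differentiable at $0$, so $y^\ell$ is merely piecewise linear rather than globally smooth, and one must justify the chain-rule step. The key observation that resolves this is that the activation pattern (which ReLUs are "on") is invariant along the ray $\{tx : t>0\}$: each pre-activation scales linearly in $t$ by the homogeneity just proved, hence keeps a fixed sign for all $t>0$. Thus $y^\ell$ restricted to this ray agrees with a single linear function, so the $t$-derivative above is well defined whenever $x$ lies in the interior of a linear region (the generic case), and there the gradient is exactly the one returned by back-propagation. For the measure-zero set of inputs on a region boundary one restricts to generic $x$, or notes that back-prop returns an element of the Clarke subdifferential for which the identity still holds by continuity.
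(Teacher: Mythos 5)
Your proof is correct and follows essentially the same route as the paper's: establish that a zero-bias ReLU net is positively homogeneous of degree one, then differentiate $f(tx) = t f(x)$ in $t$ at $t=1$ to obtain $x \cdot \nabla_x f = f(x)$. You simply supply more detail than the paper does, namely the induction establishing homogeneity and the discussion of non-differentiability on region boundaries, both of which the paper leaves implicit.
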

\begin{proof} 
If function \( f \)  is computed by a ReLU net with zero bias at each node, then it satisfies \( f(\lambda x) = \lambda f(x) \) . Now partial differentiation with respect to \( \lambda \) at \( \lambda =1 \)  gives \( x \cdot \nabla_x(f) = f(x) \) .
\end{proof} 

Past work shows how to design methods that satisfy completeness for ReLU with nonzero bias by computing integrals, which is more expensive (see~\cite{ancona2017towards}, which also explores interrelationships among methods). However, we find empirically this is not necessary because of the following phenomenon. 

\noindent{\bf Approximate completeness.}  For ReLU nets with nonzero bias, {\bf Gradient \( \odot \)  Input} in practice have the property that the sum of pixel scores varies fairly linearly with the logit value (though theory for this is lacking). See Figure \ref{fig:logitmaps} which plots this for VGG-19 trained on Imagenet. Thus up to a scaling factor, we can assume  {\bf Gradient \( \odot \)  Input} approximately satisfies completeness.

\paragraph{Enter competition.} Completeness (whether exact or approximate) allows us to consider the score of a pixel in {\bf Gradient \( \odot \) Input} as a \textquotedblleft vote\textquotedblright\ for a label. Now consider the case where $\ell$ is the label predicted by the net for input $x$. Suppose pixel $i$ has a positive score for label $\ell$ and an even more positive score for label $\ell_1$. This pixel contributes positively to both logit values. But remember that since label $\ell_1$ was not predicted by the net as the label, the logit $y^{\ell_1}$ is {\em less} than than logit $y^{\ell}$, so the contribution of pixel $x_i$'s \textquotedblleft vote\textquotedblright\ to $y^{\ell_1}$ is proportionately even higher than its contribution to $y^{\ell}$. This perhaps should make us realize that this pixel may be less relevant or even irrelevant to label $\ell$  since it is effectively siding with label $\ell_1$ (recall Figure~\ref{fig:logitmaps}).  We conclude that looking at {\bf Gradient $\odot$ Input} maps for non-chosen labels should allow us to fine-tune our estimate of the relevance of a pixel to the chosen label.

\begin{figure}
\centering

\includegraphics[scale=.3]{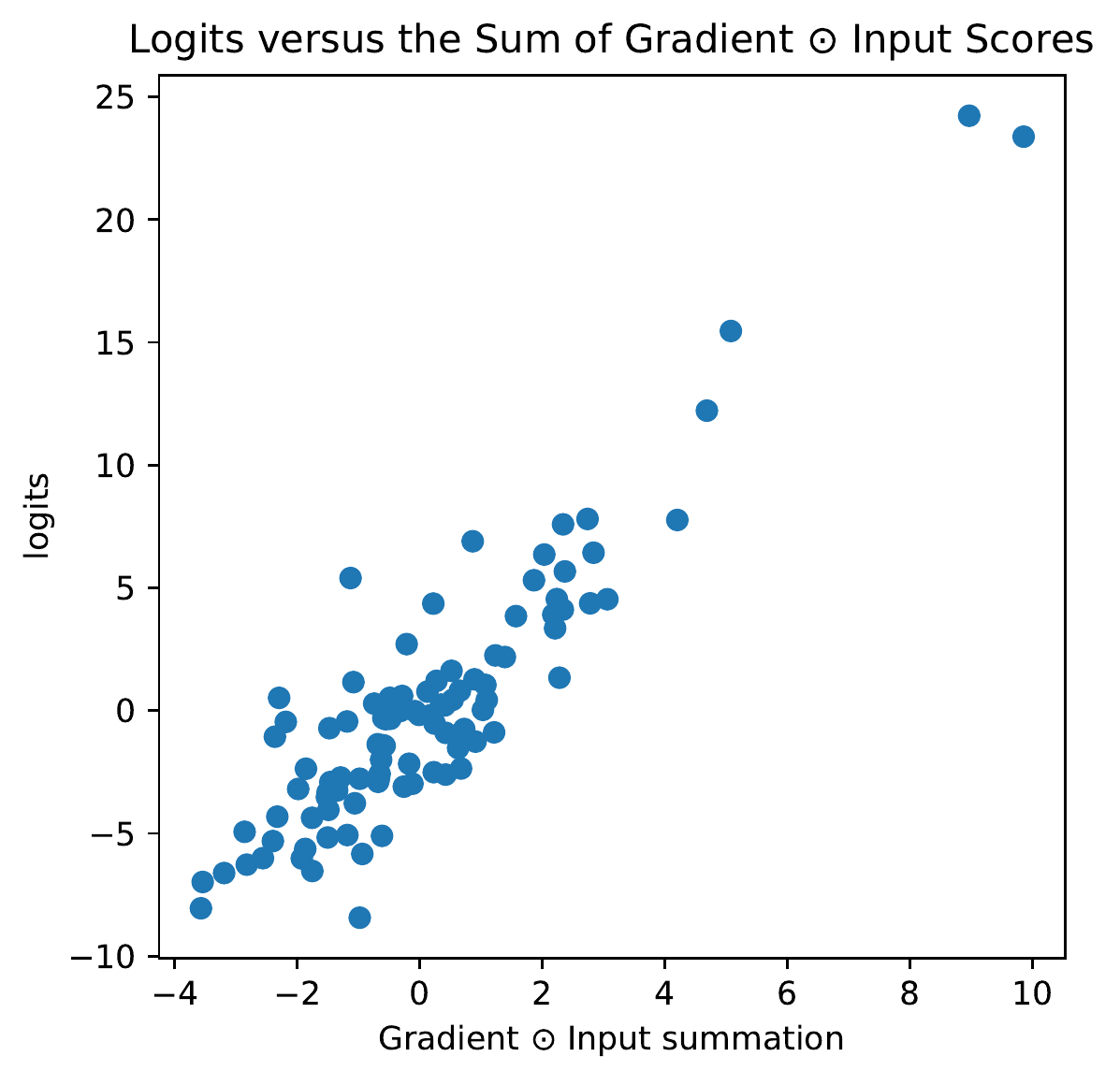}
\caption{Approximate completeness property of Gradient $\odot$ Input on ReLU nets with nonzero bias (VGG -19).  An approximately linear relationship holds between logit values  and the sum of the pixel scores for Gradient $\odot$ Input for a randomly selected image.}
\label{logitvgi}
\end{figure}

Now we formalize the competition idea. Note that positive and negative pixel scores should be interpreted differently; the former should be viewed as supporting the chosen label,  and the latter as opposing that label.

{\bf Competitive Gradient \( \odot \) Input~(CGI):} {\em Label \( \ell \) \textquotedblleft wins\textquotedblright a pixel if either (a) its map assigns that pixel as positive score higher than the scores assigned by every other label, or (b) its map assigns the pixel a negative score lower than the scores assigned by every other label.   The final saliency map consists of scores assigned by the chosen label \( \ell \) to each pixel it won, with the map containing a score \( 0 \) for any pixel it did not win.}

Using the same reasoning as above, one can add competition to any other saliency map that satisfies completeness. Below we also present experiments on adding competition to {\em LRP}. In Sections \ref{sec:paramran} and \ref{sec:dataran}  we present experiments showing that adding competition makes these saliency methods pass sanity checks.

\subsection{Why competition works: some theory}
\label{subsec:theory}

Figure~\ref{fig:logitmaps} suggests that it is a good idea to zero out some pixels in existing saliency maps. Here we develop a more principled understanding of why adding competition (a) is aggressive enough to zero out enough pixels to help pass sanity checks on randomized nets and (b) not too aggressive so as to retain a reasonable saliency map for properly trained nets.

\citet{adebayosan} used linear models to explain why methods like {\bf Gradient$\odot$Input} fail their randomization tests. These tests turn the gradient into a random vector, and if $\xi_1, \xi_2$ are random vectors, then $x\odot \xi_1$  and $x \odot \xi_2$
are visually quite similar when $x$ is an image. (See Figure 10 in their appendix.) Thus the saliency map retains a strong sense of $x$ after the randomization test, even though the gradient is essentially random. 
Now it is immediately clear that with $k$-way competition among the labels, the saliency map would be expected to become almost blank  in the randomization tests since each label is equally likely to give the highest score to a pixel so it becomes zero with probability $1-1/k$. Thus we would expect that adding competition enables the map to pass the sanity checks. In our experiments later we see that the final map is indeed very sparse.  

But one cannot use this naive model to understand why CGI does not also destroy the saliency map for properly trained nets.  The reason being that the gradient is not random and  depends on the input. In particular if $g_1$ is the gradient of a logit with respect to input $x$ then $g_1 \cdot x$ is simply the sum of the coordinates of $g_1\odot x$, which due to completeness property has to track the logit value. In other words, gradient and input are correlated, at least when logit is sufficiently nonzero. Furthermore, the amount of this correlation is given by the logit value. In practice we find that if the deep net is trained to high accuracy on a dataset, the logit corresponding to the chosen label is significantly higher than the other logits, say 2X or 4X. This higher correlation plays a role in why competition ends up preserving much of the information.

 We find the following model of the situation simplistic but illustrative: Assume gradient $g_1$ and input $x$ are random vectors drawn from  ${\mathcal N}(0, 1/n)^n$  {\em conditional} on $g_1\cdot x \geq \delta$ (i.e., correlated random vectors), where $\delta$ corresponds to the logit value. On real data we find that $\delta$ is, $0.1$ to $0.2$ for the chosen label, which is a fairly significant since the inner product of two independent draws from ${\mathcal N}(0, 1/n)^n$ would be only $1/\sqrt{n})$ in magnitude, say $0.01$ when  $n =10000$. 

Let $g_2$ be the gradient of a second (non-chosen) logit with respect to $x$. Figure~\ref{fig:logitmaps} suggests that actually $g_1$ and $g_2$ can have significant overlap in terms of their high coordinates, which we referred to earlier as {\em shared features} or {\em incidental features}  (see Figure~\ref{fig:logitmaps}). We want competition to give us a final saliency map that downplays  pixels in this overlap, though not completely eliminate them. 

Without loss of generality let the first $n/2$ coordinates correspond to the shared features. So we can think of $g_1 = (h_1, \xi_1)$ and $g_2 =( h_2, \xi_2)$ where 
$h_1, h_2$ respectively are the sub-vectors of $g_1, g_2$ respectively in the shared features and $\xi_1, \xi_2$ are random $n/2$-dimensional vectors in the second halves. All these vectors are assumed to be unit vectors. It is unreasonable to expect the coordinates of $h_1$ and  $h_2$ to be completely identical, but we assume there is significant correlation, so assume $h_1 \cdot h_2 \geq 1/2$. 

Now imagine picking the input $x$ as mentioned above: Given $g_1$ it is a random vector conditional on $g_1\cdot x \geq \delta$. Then a simple calculation via measure concentration shows that half of the this inner product of $\delta$ must come from the first $n/2$ coordinates, meaning $(h_1, 0) \cdot x \approx \delta /2$. Another application of measure concentration shows that $(h_2, 0)\cdot x \approx \delta/4$, reflecting the fact that $h_1 \cdot h_2 =1/2$.

{\em What happens after we apply competition (i.e., CGI)?} An exact calculation requires a multidimensional integral using the Gaussian distribution.  But simulations (see Figures ~\ref{fig:c1}, \ref{fig:c2} in appendix) show that
after zeroing out coordinates in $g_1 \odot x$ due to competition from $g_2\odot x$, we have a contribution of at least $c_1\delta/2$ left from the first $n/2$ coordinates and a contribution of at least $c_2 \delta/2$ from the last $n/2$ coordinates, where $c_1, c_2$ are some constants. In other words, there remains a significant contribution from both the shared features, and the non-shared features.  Thus the competition still allows the saliency map to retain some kind of approximate completeness.

\noindent{\em Remark 1:} There is something else missing in the above account which in practice ensures that competition is not too aggressive in zeroing out pixels in normal use: {\em entries in the gradients are non-uniform, so the subset of coordinates with high values is somewhat sparse.}   Thus for each label/logit, the bulk of its score is carried by a subset of pixels. If each label concentrates its scores on \(\rho\) fraction of pixels then heuristically one would expect two labels to compete only on \( \rho^2 \) fraction of pixels. For example if \( \rho = 0.2\) then they would compete only on \(0.04\) or \(4\)\% of the pixels. This effect can also be easily incorporated in the above explanation. {\em Remark 2:} The above analysis suggests that saliency map can make sense for any label with a sufficiently large logit (eg the logit for label "7" in Figure~\ref{fig:logitmaps}.)

\subsection{Formal Description of CGI}
\label{alg:desc}

Here we provide a formal definition of our algorithm, CGI, which can be found in Algorithm \ref{alg:CGI},

Let \( S[i] \) denote the logit computed by the \( ith \) output node of our neural network. For each output node,  \( i \in [1,...,C] \), and for each scalar coordinate of the input, \( x_j \) we compute \( \frac{\partial S[i]}{\partial x_j}\), i.e. we compute Gradient  \( \odot \) Input for each scalar element  $x_j$ of x  for each of the C output nodes. Letting y denote the index of the chosen label, if \( \frac{\partial S[y]}{\partial x_j} \cdot x_j>0\), $x_j$ will be included in the heat map if \( \frac{\partial S[y]}{\partial x_j} \cdot x_j \)  is equal to the maximum of \( \{\frac{\partial S[i]}{\partial x_j} \cdot x_j \}, i \in {1,...,C} \) , and its value in the heat map will be \( \frac{\partial S[y]}{\partial x_j} \cdot x_j\). If \( \frac{\partial S[y]}{\partial x_j } \cdot x_j <0 \), it will be included in the heat map if  \( \frac{\partial S[y]}{\partial x_j} \cdot x_j\) is equal to the minimum of \( \{\frac{\partial S[i]}{\partial x_j} \cdot x_j \}, i \in {1,...,C} \) , and its value in the heat map will be \( \frac{\partial S[y]}{\partial x_j} \cdot x_j \). For all other inputs, the default value in the heat map is 0.

\begin{algorithm}[H]
	\KwIn{An image \( \in R^{d} \) and a neural network \( S : R^{d} \rightarrow R^{C}\) }
	initialization: set H = \( 0 \in R^{d} \) vector. Let y be the index of the chosen output node\; 
	\For{Element in Image }{
	        Calculate \( \frac{\partial S[i]}{\partial \text{Element}} \) for all output nodes \( S[i] \)\\
		\eIf{ \( \frac{\partial S[y]}{\partial \text{Element}} \cdot \text{Element} > 0 \)}
		{
		
		\If{ \( \frac{\partial S[y]}{\partial \text{Element}} \cdot \text{Element} \geq \frac{\partial S[i]}{ \partial \text{Element}} \text{Element} \forall i \neq y \)}
		{
		Make the corresponding element of  \( H \) equal to \(\frac{\partial S[y]}{\partial \text{Element}} \cdot \) Element }
		
		}
		{
		
		\If{ \( \frac{\partial S[y]}{\partial \text{Element}} \cdot \text{Element} \leq \frac{\partial S[i]}{ \partial \text{Element}} \text{Element} \forall i \neq y \) }
		{
		Make the corresponding element of  \( H \) equal to \(\frac{\partial S[y]}{\partial \text{Element}} \cdot \) Element 
		}

		}
	}
	\caption{Competitive Gradient \( \odot \) Input}
	\label{alg:CGI}
\end{algorithm}



\section{Experiments}
\label{sec:paramran}
\begin{figure}
\centering

\includegraphics[scale=.09]{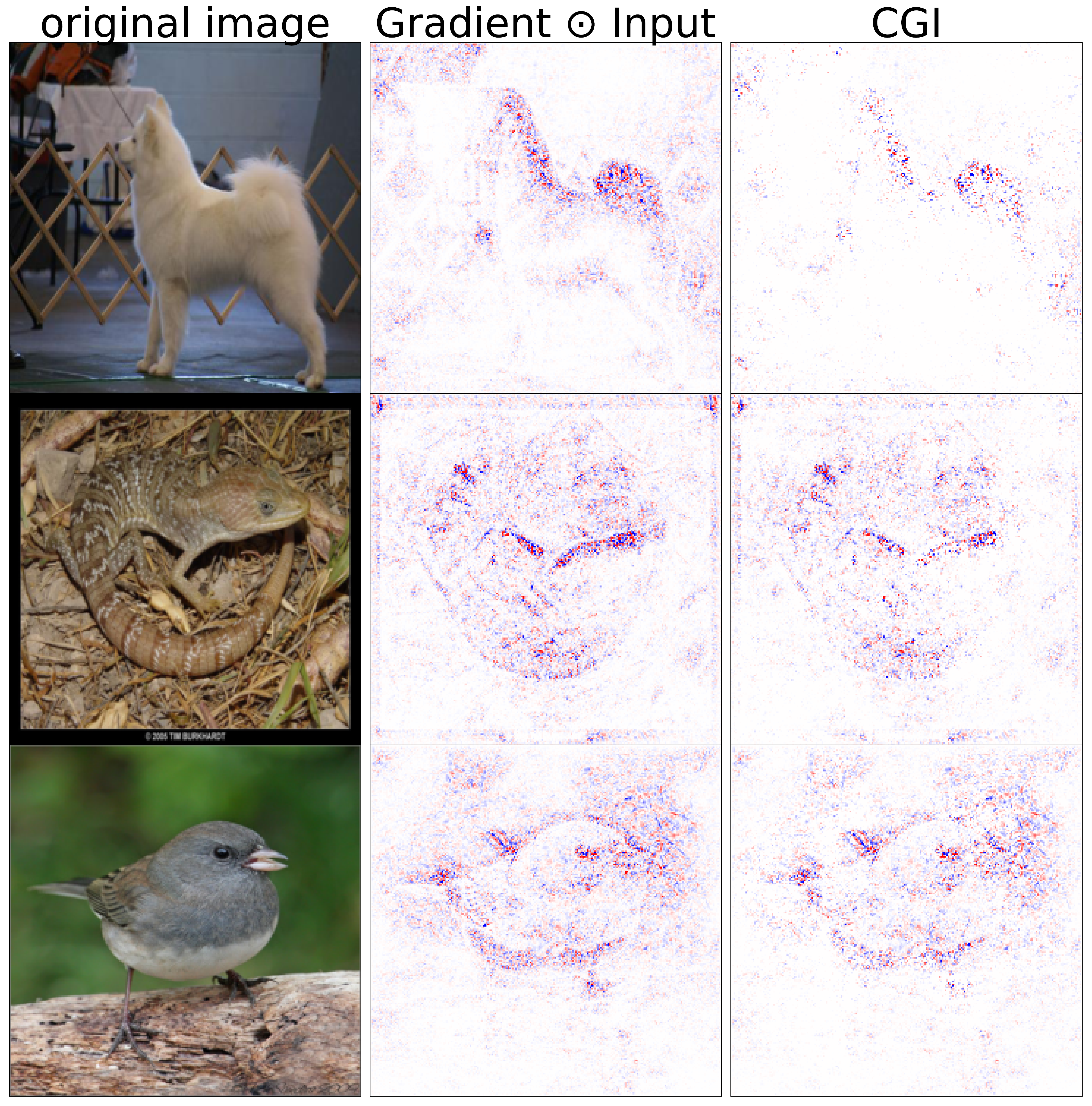}

\caption{Comparison of CGI saliency maps with Gradient $\odot$ Input saliency maps. Original images are shown on the left. }
\label{examps}
\end{figure}

Figure \ref{examps} presents an example of CGI maps on the VGG-19 architecture on Imagenet. We find that our maps are of comparable quality to Gradient $\odot$ Input.

\subsection{Parameter Randomization test}
The goal of these experiments is to determine whether CGI is sensitive to model parameters.  We run the parameter randomizaion tests on the  VGG-19 architecture  \citet{simonyan2014very} with pretrained weights on ImageNet \citet{russakovsky2015imagenet} using layerwise and cascading randomization.

 \subsubsection{Layerwise Randomization}
 
 In these experiments, we consider what happens when certain layers of the model are randomized. This represents an intermediate point between the model having learned nothing, and the model being fully trained.

 Figure \ref{divlayerabri}  shows the results of randomizing individual layers of the VGG-19 architecture with pretrained weights. (Figure  \ref{divlayer} in the Appendix shows the full figure ).The text underneath each image represents which layer of the model was randomized, with the leftmost label of 'original' representing the original saliency map of the fully trained model.  The top panel shows the saliency maps produced by {\bf CGI }, and the bottom panel the maps produces by { \bf Gradient $\odot$ Input}. We find that the { \bf Gradient $\odot$ Input} method displays the bird no matter which layer is randomized, and that our method immediately stops revealing the structure of the bird in the saliency maps as soon as any layer is randomized. Figure \ref{abslayer} in the Appendix shows a similar result but utilizing absolute value visualization. Notice that CGI's sensitivity to model parameters still holds.

\begin{figure}
\centering

\includegraphics[scale=.21]{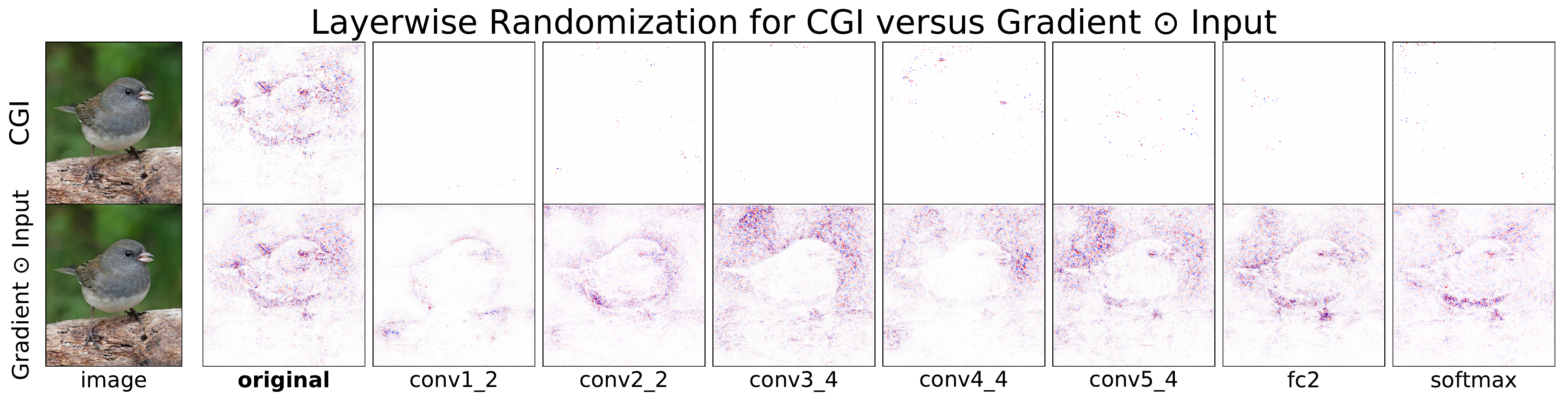}

\caption{Saliency map for layer wise randomization on VGG -19 on Imagenet  for Gradient $\odot$ Input versus CGI.  We find that in CGI, the saliency map is almost blank when any layer is reinitialized. By contrast, we find that the original Gradient $\odot$ Input method displays the structure of the bird, no matter which layer is randomized. }
\label{divlayerabri}
\end{figure}

\begin{figure}
\centering

\includegraphics[scale=.21]{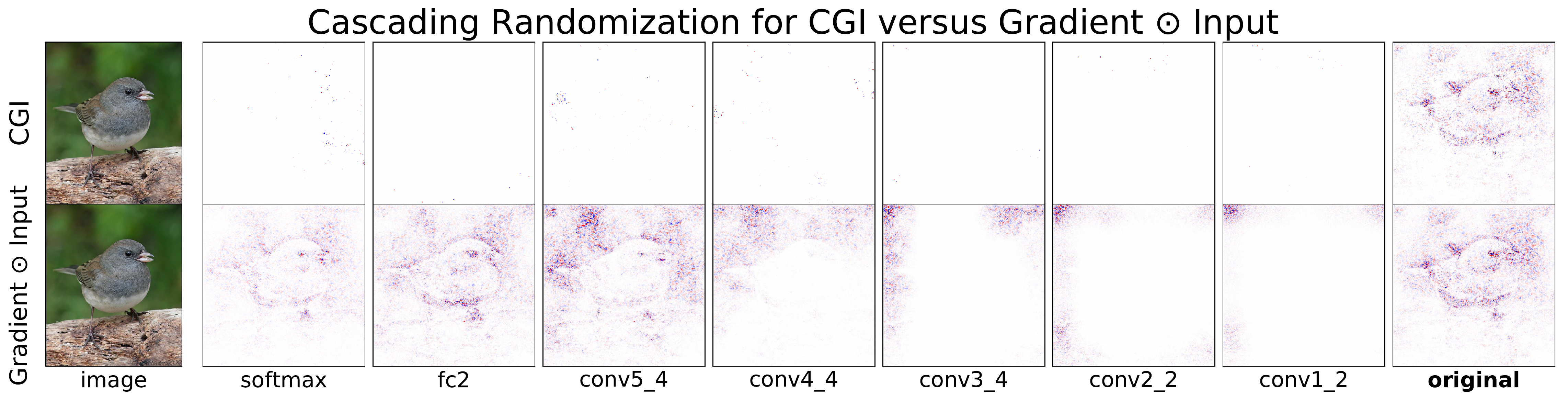}

\caption{Saliency map for cascading randomization on VGG -19 on Imagenet   for Gradient $\odot$ Input versus CGI. We find that in CGI, the saliency map is almost blank even when only the softmax layer has been reinitialized. By contrast, we find that the original Gradient $\odot$ Input method displays the structure of the bird, even after multiple blocks of randomization. }
\label{divcasc}
\end{figure}

\begin{figure}
\centering

\includegraphics[scale=.21]{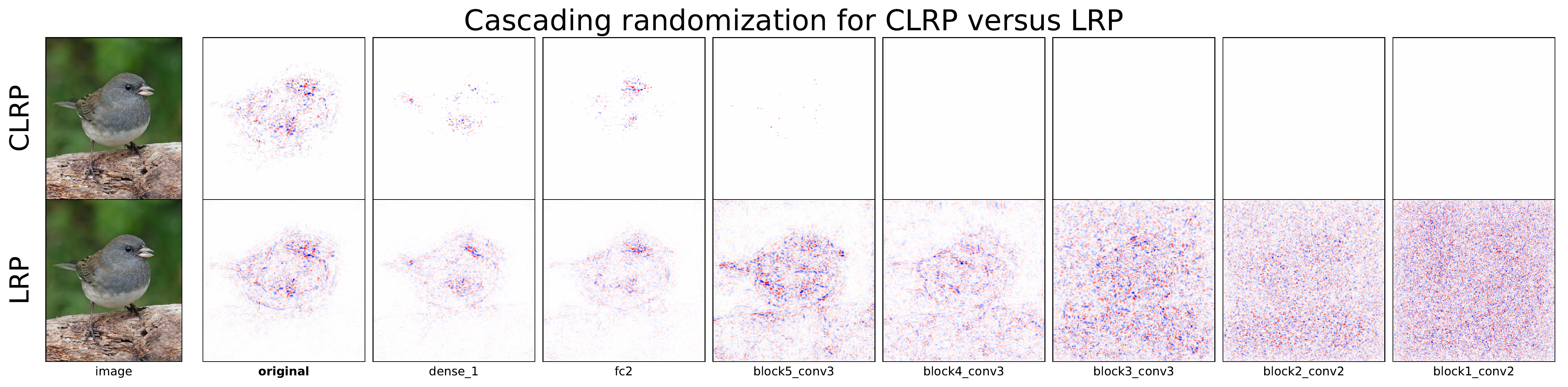}

\caption{Saliency map cascading randomization on VGG -16 on Imagenet  LRP versus CLRP. We notice that LRP shows the structure of the bird even after multiple blocks of randomization. CLRP eliminates much of the structure of the bird.  }
\label{divcasclrp}
\end{figure}

\subsubsection{Cascading Randomization}
In these experiments we consider we what happens to the saliency maps when we randomize the network weights in a cascading fashion. We randomize the weights of the VGG 19 model starting from the top layer, successively, all the way to the bottom layer.

Figure \ref{divcasc} shows our results. The rightmost figure represents the original saliency map when all layer weights and biases are set to their fully trained values. The leftmost saliency map represents the map produced when only the softmax layer has been randomized. The image to the right of that when everything up to and including conv5\_4 has been randomized, and so on.  Again we find that  { \bf CGI}  is much more sensitive to parameter randomization than  { \bf Gradient $\odot$ Input}.

\subsubsection{Comparison with LRP}

We also apply our competitive selection of pixels to LRP scores, computed using the Innvestigate library \citet{alber2018innvestigate} on the VGG-16 architecture with pretrained weights on Imagenet. The algorithm is the analogue of Algorithm \ref{alg:CGI}, but we provide the full algorithm as Algorithm \ref{alg:CLRP} in the Appendix for clarity. Figure \ref{divcasclrp} shows our results. We find that our competitive selection process (CLRP) benefits the LRP maps as well. The LRP maps show the structure of the bird even after multiple blocks of randomization, while our maps greatly reduce the prevalence of the bird structure in the images.

\subsection{Data Randomization Test}
We run experiments to determine whether our saliency method is sensitive to model training. We use a version of Alexnet \citet{krizhevsky2012imagenet} adjusted to accept one color channel instead of three and train on MNIST.  We randomly permute the lables in the training data set and train the model to greater than 98 \% accuracy and examine the saliency maps.  Figure \ref{secondsanmnist} shows our results. On the left hand side is the original image. In the middle is the map produced by {\bf Gradient \( \odot \) Input} . We find that the input structure, the number 3, still shows through with the { \bf Gradient \( \odot \) Input} method. On the other hand,  { \bf CGI}  removes the underlying structure of the number.

\label{sec:dataran}

\begin{figure}
\centering
\includegraphics[scale=0.2]{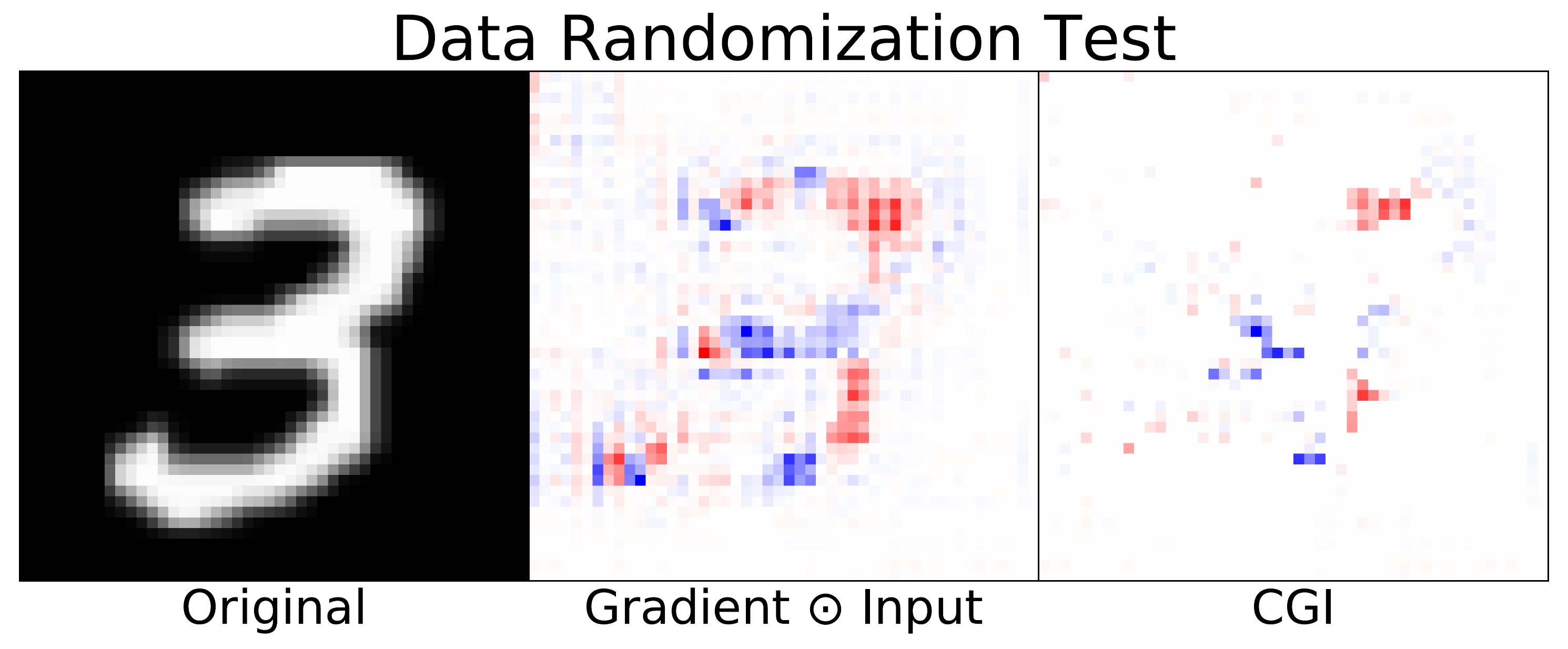}
\caption {Second sanity check for Alexnet  MNIST. On the middleimage we find that using the original gradient times input method results in an image where the original structure of the number 3 is still visible. On the right hand side image we find that our modification removes the structure of the original input image, as we would expect for a model that had been fitted on randomized data.  }
\label{secondsanmnist}
\end{figure}






\section{Conclusion}

We have introduced the idea of {\em competition among labels} as a simple modification to existing saliency methods. Unlike most past methods, this produces saliency maps by looking at the gradient of all label logits, instead of just the chosen label. Our modification keeps existing methods relevant for human evaluation (as shown on two well-known methods {\bf Gradient \( \odot \) Input} and {\bf LRP}) while allowing them to pass sanity checks of~\citet{adebayosan}, which had called into question the validity of saliency methods. Possibly our modification even improves the quality of the map, by zero-ing out irrelevant features. We gave some theory in Section~\ref{subsec:theory} to justify the competition idea for {\bf Gradient $\odot$ Input} maps for ReLU nets. 

While competition seems a good way to combine information from all logits, we leave open the question of what is the optimum way to design saliency maps by combining information from all logits\footnote{One idea that initially looked promising ---looking at gradients of outputs of the softmax layer instead of the logits---did not yield good methods in our experiments.}.

The recently-proposed sanity checks randomize the net in a significant way, either by randomizing a layer or training on corrupted data. We think it is an interesting research problem to devise  less disruptive sanity checks which are more subtle.

\small
\bibliographystyle{unsrtnat}
\bibliography{main}

\newpage

\section{Appendix}

Let $LRP[j, Element]$ be the LRP score of Element. when decomposing output node $j$.

Let $y$ be the index of the chosen output node. 

\begin{algorithm}[H]
	\KwIn{An image $\in R^{d}$ and a neural network $S : R^{d} \rightarrow R^{C}$ }
	
	initialization: set H = $0 \in R^{d}$  vector\;
	\For{Element in Image }{
	        Calculate LRP[i,Element] for all output nodes $i$\\
		\eIf{LRP[y, Element] > 0}
		{
		
		\If{ $LRP[y, Element] \geq LRP[i, Element] \forall i \neq y$ }
		{
		Make the corresponding element of  $H$ the LRP[y, Element] }
		
		}
		{
		
		\If{ $LRP[y, Element] \leq LRP[i, Element] \forall i \neq y$}
		{
	         Make the corresponding element of  $H$ the LRP score of Element 
		}

		}
	
	}
	\caption{Competitive Layerwise Relevance Propagation}
	\label{alg:CLRP}
\end{algorithm}

\begin{figure}[ht]
\centering

\includegraphics[width=0.9\linewidth]{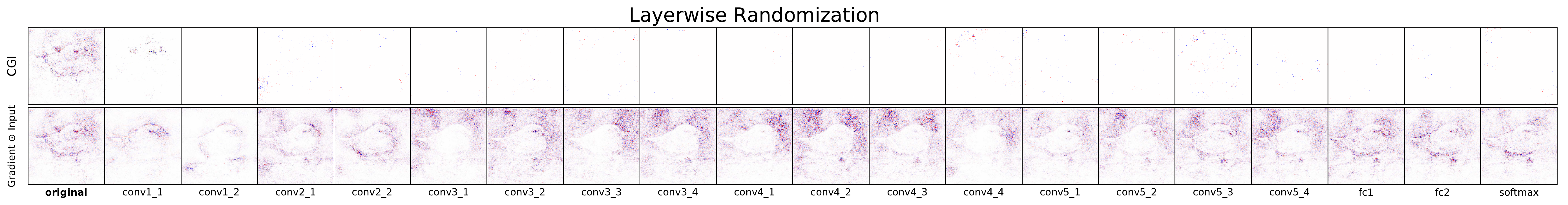}

\caption{Saliency map for layer-wise randomization of the learned weights. Diverging visualization where we plot the positive importances in red and the negative importances in blue. We find that with CGI, the saliency map is almost blank when any layer is reinitialized. By contrast, we find that  Gradient $\odot$ Input displays the structure of the bird, no matter which layer is randomized. }
\label{divlayer}
\end{figure}

\begin{figure}[ht]
\centering

\includegraphics[width=0.9\linewidth]{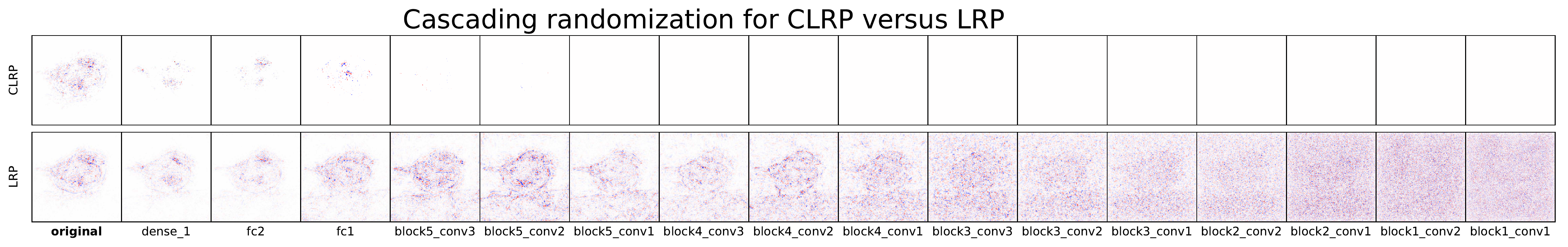}

\caption{Saliency map cascading randomization LRP versus CLRP.  }
\label{fulldivcasclrp}
\end{figure}

\begin{figure}
\centering

\includegraphics[width=0.9\linewidth]{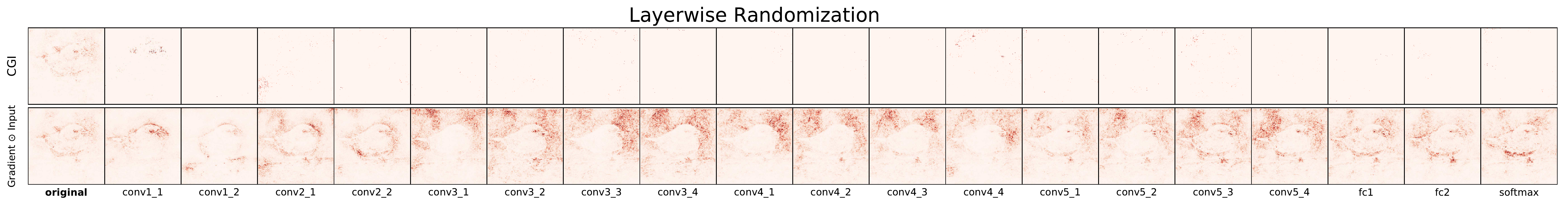}

\caption{Saliency map for layer-wise randomization of the learned weights. Absolute value visualization where we plot the absolute value of the saliency map. We find that using CGI, the saliency map is almost blank when any layer is reinitialized. By contrast, we find that Gradient $\odot$ Input displays the structure of the bird, no matter which layer is randomized. }
\label{abslayer}
\end{figure}

\begin{figure}
\centering

\includegraphics[scale=.5]{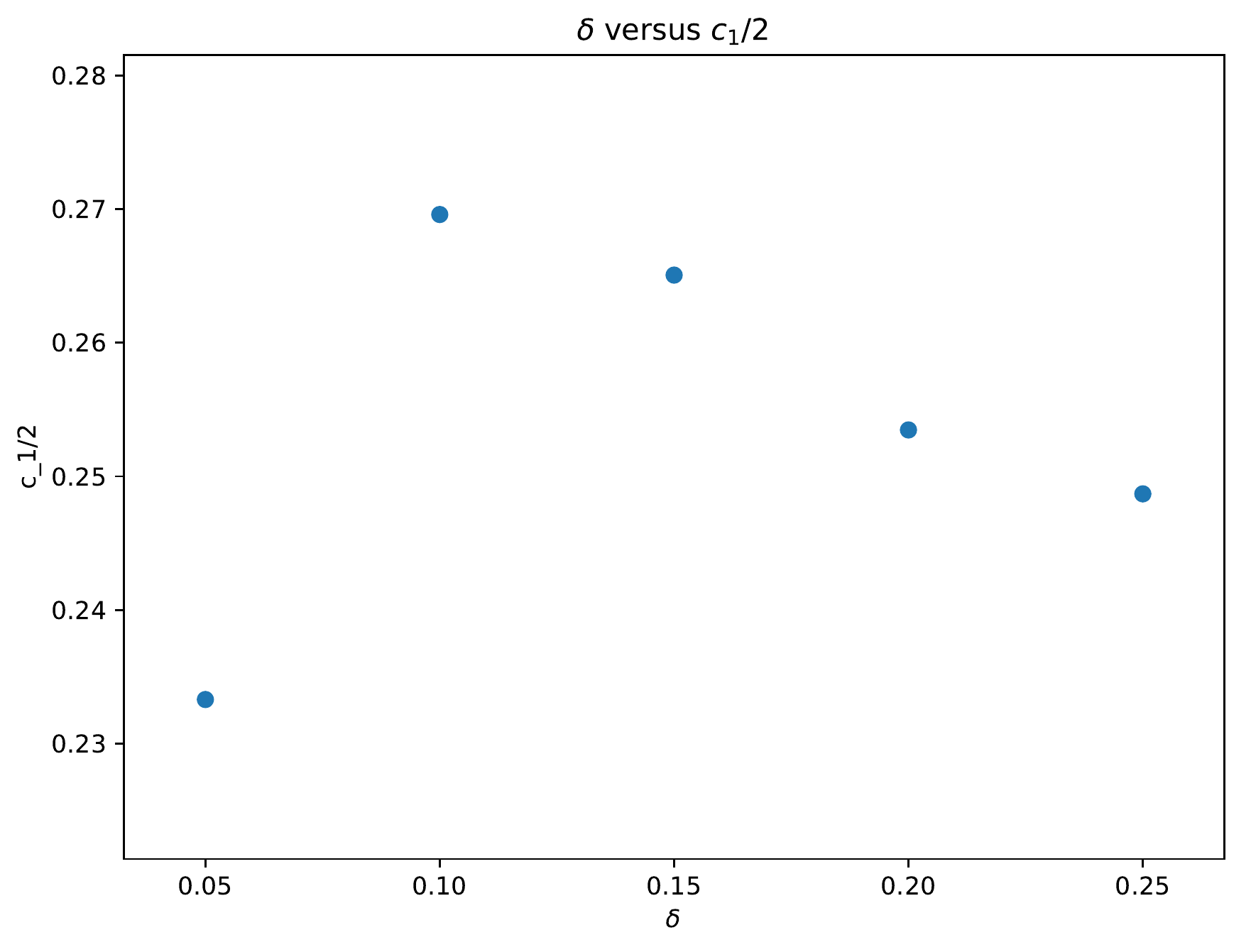}

\caption{$\delta$ versus $c_1$ for 100 averaged samples}
\label{fig:c1}
\end{figure}

\begin{figure}
\centering

\includegraphics[scale=.5]{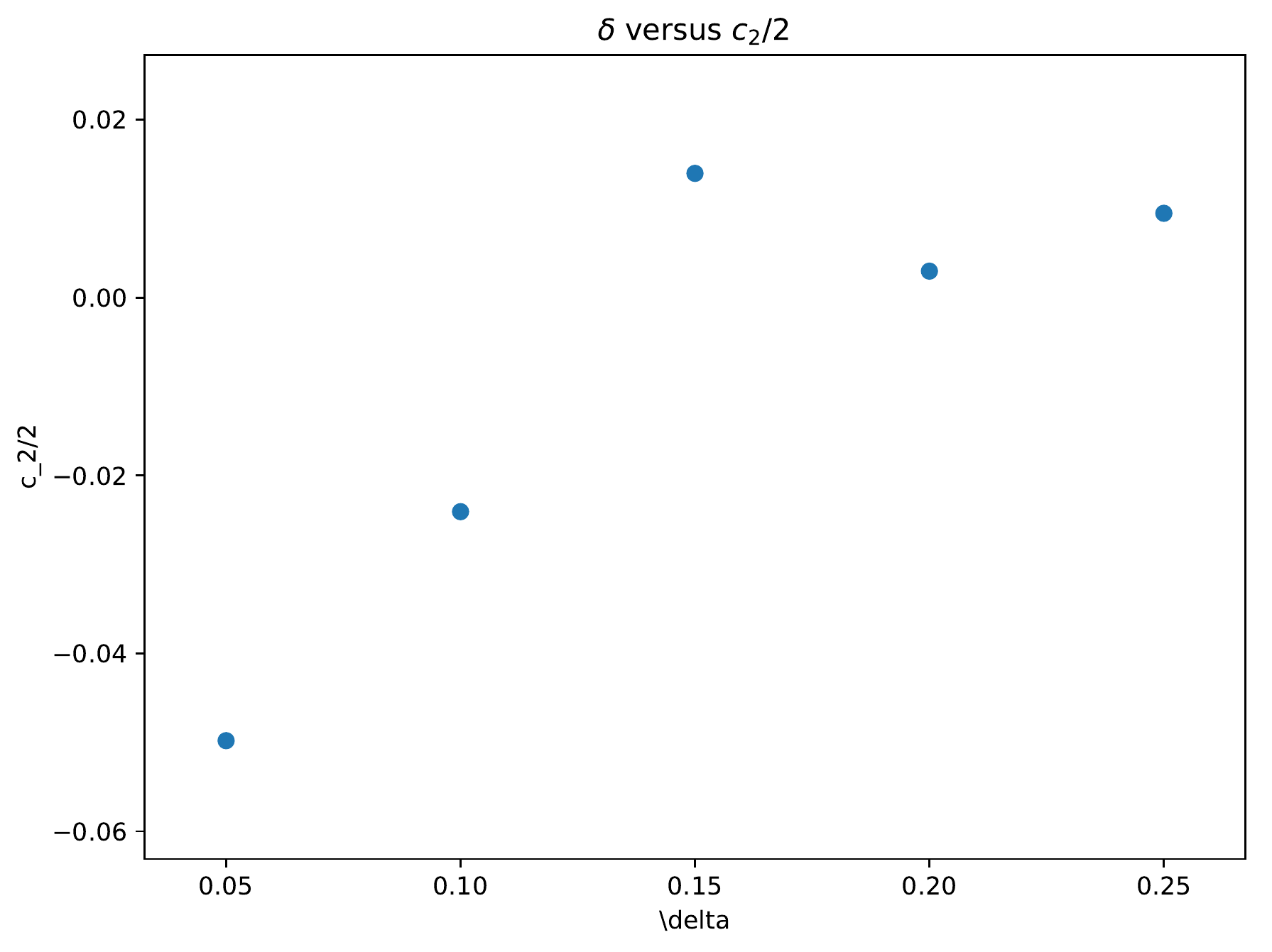}

\caption{$\delta$ versus $c_2$ for 100 averaged samples}
\label{fig:c2}
\end{figure}

\end{document}